\newif\ifanon\anonfalse 

\documentclass[11pt]{article}
\usepackage[margin=1in]{geometry}
\usepackage[utf8]{inputenc}
\usepackage[T1]{fontenc} 

\usepackage{mathpazo}
\usepackage{framed}

\usepackage{xcolor}
\usepackage{color}
\definecolor{niceRed}{RGB}{190,38,38}
\definecolor{niceYellow}{HTML}{f5b400}
\definecolor{blueGrotto}{HTML}{059DC0}
\definecolor{royalBlue}{HTML}{057DCD}
\definecolor{navyBlue}{HTML}{0B579C}
\definecolor{limeGreen}{HTML}{81B622}
\definecolor{nicePurple}{HTML}{9c27b0}
\definecolor{lightRoyalBlue}{HTML}{def2ff}  
\definecolor{gold}{HTML}{ffa300}

\newcommand{\white}[1]{\textcolor{white}{#1}}

\usepackage{color-edits} 
\addauthor[Jane]{jl}{blue}
\addauthor[Anay]{am}{gold}
\addauthor[GPT]{gpt}{niceRed}
\addauthor[Manolis]{mz}{teal}

\usepackage{hyperref}
\hypersetup{
  colorlinks = true, 
  urlcolor = {blueGrotto},
  linkcolor = {royalBlue},
  citecolor = {navyBlue}
}
\usepackage[
  natbib=true,
  style=alphabetic,
  sorting=alphabeticlabel,
  sortcites=true,
  minbibnames=3,
  maxbibnames=999,
  mincitenames=1,
  maxcitenames=4,
  minalphanames=1,
  maxalphanames=6,
]{biblatex}

\DeclareSortingTemplate{alphabeticlabel}{
  \sort[final]{%
    \field{labelalpha} 
  }
  \sort{%
    \field{year}   
  }
  \sort{%
    \field{title}  
  }
}

\AtBeginRefsection{\GenRefcontextData{sorting=ynt}}
\AtEveryCite{\localrefcontext[sorting=ynt]}
\AtEveryBibitem{%
  \clearname{editor}%
}
\AtEveryBibitem{%
  \clearlist{location}%
} 

\usepackage{booktabs} 
\usepackage{multicol} 
\usepackage{multirow} 
\usepackage{longtable}

\usepackage{subcaption}
\usepackage{graphicx}
\usepackage{float} 
\usepackage{tikz}
\usepackage{pgfplots}
\pgfplotsset{compat=1.17}
\usepgfplotslibrary{fillbetween}
\usetikzlibrary{arrows.meta}
\usetikzlibrary{calc}

\tikzset{
  myNodeFlex/.style={
    draw,
    rectangle,
    rounded corners,
    text centered,
    minimum height=1.5em,
  }
}

\tikzset{
  myNode/.style={
    draw,
    rectangle,
    rounded corners,
    text centered,
    minimum height=1.5em,
    minimum width=3cm,
    text width=5cm,    
  }
}

\tikzset{
  myNodeNarrow/.style={
    draw,
    rectangle,
    rounded corners,
    text centered,
    minimum height=1.5em,
    minimum width=1cm,
  }
}

\tikzset{
  myNodeWide/.style={
    draw,
    rectangle,
    rounded corners,
    text centered,
    minimum height=1.5em,
    minimum width=6cm,
  }
}

\usepackage{physics} 
\usepackage{amsmath}
\usepackage{amssymb}
\usepackage{amsthm}
\usepackage{bbm}
\usepackage{blkarray}
\usepackage{mathtools}
\usepackage{nicefrac}
\usepackage{dsfont}
\usepackage{pifont} 
\usepackage{thm-restate} 

\usepackage{setspace}

\usepackage{typed-checklist}
\usepackage{enumerate} 
\usepackage{enumitem} 
\usepackage{tcolorbox}
\usepackage[framemethod=TikZ]{mdframed}
\mdfsetup{%
backgroundcolor=royalBlue!0, 
roundcorner=4pt,
skipabove=5pt,
skipbelow=5pt,
linewidth=1pt}
\usepackage{changepage} 

\usepackage{algorithm}
\usepackage{algpseudocode}[1]

\usepackage{mathrsfs} 
\usepackage{soul} 
\setuldepth{Berlin}

\usepackage{etoolbox}
\usepackage{array}
\usepackage[capitalise,noabbrev,nameinlink,sort]{cleveref} 

\theoremstyle{plain} 
\newtheorem{theorem}{Theorem}

\newtheorem{lemma}[theorem]{Lemma}

\newtheorem{definition}{Definition}

\newtheorem*{definition*}{Definition}

\theoremstyle{definition}

\theoremstyle{remark}

\AfterEndEnvironment{definition}{\noindent\ignorespaces}
\AfterEndEnvironment{infdefinition}{\noindent\ignorespaces}
\AfterEndEnvironment{assumption}{\noindent\ignorespaces}
\AfterEndEnvironment{problem}{\noindent\ignorespaces}
\AfterEndEnvironment{openproblem}{\noindent\ignorespaces}
\AfterEndEnvironment{lemma}{\noindent\ignorespaces}
\AfterEndEnvironment{theorem}{\noindent\ignorespaces}
\AfterEndEnvironment{proposition}{\noindent\ignorespaces}
\AfterEndEnvironment{fact}{\noindent\ignorespaces}
\AfterEndEnvironment{question}{\noindent\ignorespaces}
\AfterEndEnvironment{corollary}{\noindent\ignorespaces}
\AfterEndEnvironment{model}{\noindent\ignorespaces}
\AfterEndEnvironment{remark}{\noindent\ignorespaces}
\AfterEndEnvironment{proof}{\noindent\ignorespaces}
\AfterEndEnvironment{fact}{\noindent\ignorespaces}
\AfterEndEnvironment{minftheorem}{\noindent\ignorespaces}
\AfterEndEnvironment{inftheorem}{\noindent\ignorespaces}
\AfterEndEnvironment{maintheorem}{\noindent\ignorespaces}
\AfterEndEnvironment{restatable}{\noindent\ignorespaces}
\AfterEndEnvironment{infassumption}
{\noindent\ignorespaces}
\AfterEndEnvironment{infcorollary}{\noindent\ignorespaces}

\crefname{section}{Section}{Sections}
\crefname{theorem}{Theorem}{Theorems}
\crefname{lemma}{Lemma}{Lemmas}
\crefname{problem}{Problem}{Problems}
\crefname{program}{Program}{Programs}
\crefname{definition}{Definition}{Definitions}
\crefname{conjecture}{Conjecture}{Conjectures}
\crefname{corollary}{Corollary}{Corollaries}
\crefname{construction}{Construction}{Constructions}
\crefname{conjecture}{Conjecture}{Conjectures}
\crefname{claim}{Claim}{Claims}
\crefname{observation}{Observation}{Observations}
\crefname{proposition}{Proposition}{Propositions}
\crefname{fact}{Fact}{Facts}
\crefname{question}{Question}{Questions}
\crefname{problem}{Problem}{Problems}
\crefname{remark}{Remark}{Remarks}
\crefname{example}{Example}{Examples}
\crefname{equation}{Equation}{Equations}
\crefname{appendix}{Section}{Sections}
\crefname{algorithm}{Algorithm}{Algorithms}
\crefname{model}{Model}{Models}
\crefname{figure}{Figure}{Figures}
\crefname{infassumption}{Informal Assumption}{Informal Assumptions}
\crefname{inftheorem}{Informal Theorem}{Informal Theorems}
\crefname{infdefinition}{Informal Definition}{Informal Definitions}
\crefname{minftheorem}{Main Informal Theorem}{Main Informal Theorems}
\crefname{maintheorem}{Main Theorem}{Main Theorems}
\crefname{assumption}{Assumption}{Assumptions}
\crefname{step}{Step}{Steps}
\crefname{result}{Result}{Results}
\crefname{event}{Event}{Events}
\crefname{none}{}{}

\newlist{asmpenum}{enumerate}{1} 
\setlist[asmpenum]{label={\arabic*.},ref=\theassumption.{\arabic*}}
\crefname{asmpenumi}{Assumption}{Assumptions}

\BeforeBeginEnvironment{subenvironment}{\white{.}\\ \vspace{-10mm}}
\AfterEndEnvironment{subenvironment}{\vspace{4mm}}

\newcommand{\yesnum}{\refstepcounter{equation}\tag{\theequation}}
\makeatletter
\newcommand{\tagnum}[2]{%
    \refstepcounter{equation}%
    \tag{#1) \ (\theequation}%
    \protected@write \@auxout {}{%
        \string \newlabel {#2}{{\theequation}{\thepage}{}{equation.\theequation}{}}%
    }%
}
\makeatother

\makeatletter
\renewcommand{\eqref}[1]{\textup{\eqrefform@{\ref{#1}}}}
\let\eqrefform@\tagform@

\newcommand{\changetag}[1]{%
  \renewcommand\tagform@[1]{\maketag@@@{(\ignorespaces#1\unskip\@@italiccorr)}}%
}
\makeatother

\newcommand{\qquadtext}[1]{\qquad\text{#1}\qquad}

\newcommand{\qquadand}{\qquadtext{and}}

\def\abs#1{\left| #1 \right|}

\newcommand{\inbrace}[1]{\left\{#1\right\}}

\newcommand{\inparen}[1]{\left(#1\right)}
\newcommand{\insquare}[1]{\left[#1\right]}

\let\norm\relax
\newcommand{\norm}[1]{\ensuremath{\left\| #1\right\|}}

\newcommand{\R}{\mathbb{R}}

\newcommand{\Z}{\mathbb{Z}}

\newcommand{\E}{\operatornamewithlimits{\mathbb{E}}} 
\newcommand{\Ex}{\E}

\newcommand{\sfrac}[2]{{#1/#2}} 
\newcommand{\nfrac}[2]{\nicefrac{#1}{#2}}

\newcommand{\eps}{\varepsilon}
\renewcommand{\epsilon}{\varepsilon}
\makeatletter
\newcommand*{\tran}{{\mathpalette\@tran{}}}
\newcommand*{\@tran}[2]{\raisebox{\depth}{$\m@th#1\intercal$}}
\makeatother

\mathchardef\NABLA"272
\newcommand*{\Nabla}{\boldsymbol\NABLA}
\let\nabla\Nabla

\newcommand{\wt}[1]{\widetilde{#1}}

\DeclareMathAlphabet{\mathdutchcal}{U}{dutchcal}{m}{n}
\SetMathAlphabet{\mathdutchcal}{bold}{U}{dutchcal}{b}{n}
\DeclareMathAlphabet{\mathdutchbcal}{U}{dutchcal}{b}{n}

\DeclareMathAlphabet\urwscr{U}{urwchancal}{b}{n}%
\DeclareMathAlphabet\rsfscr{U}{rsfso}{m}{n}
\DeclareMathAlphabet\euscr{U}{eus}{m}{n}
\DeclareFontEncoding{LS2}{}{}
\DeclareFontSubstitution{LS2}{stix}{m}{n}
\DeclareMathAlphabet\stixcal{LS2}{stixcal}{m} {n}

\renewcommand{\paragraph}[1]{\medskip \noindent\textbf{#1}~}

\newcommand{\ie}{\textit{i.e.}}
\newcommand{\eg}{\textit{e.g.}}

\newcommand{\hypo}[1]{\mathdutchcal{#1}}

\newcommand{\hyH}{\hypo{H}}

\newcolumntype{L}[1]{>{\raggedright\let\newline\\\arraybackslash\hspace{0pt}}m{#1}}
\newcolumntype{C}[1]{>{\centering\let\newline\\\arraybackslash\hspace{0pt}}m{#1}}
\newcolumntype{R}[1]{>{\raggedleft\let\newline\\\arraybackslash\hspace{0pt}}m{#1}}

\title{A Note on Non-Negative $L_1$-Approximating Polynomials}

\author{ 
        \centering
        \hspace{4mm}
        \begin{tabular}{C{4.75cm}C{4.75cm}C{4.75cm}}
        {\bf Jane H. Lee} & {\bf Anay Mehrotra} & {\bf Manolis Zampetakis}\\
        Yale University & Stanford University & Yale University\\[-6mm]
        {\small\phantom{....................}} \mbox{\small\href{mailto:jane.h.lee@yale.edu}{\texttt{jane.h.lee@yale.edu}}} & {\small \phantom{............}} \mbox{\small\href{mailto:anaymehrotra1@gmail.com}{\texttt{anaymehrotra1@gmail.com}}} & \mbox{\small\href{mailto:manolis.zampetakis@yale.edu}{\texttt{manolis.zampetakis@yale.edu}}}
        \\[-3mm]
        \end{tabular} 
} 

\date{}

\begin{document}

\maketitle
\thispagestyle{empty}

\begin{abstract}
    $L_1$-Approximating polynomials, \ie{}, polynomials that approximate indicator functions in $L_1$-norm under certain distributions, are widely used in computational learning theory.
    We study the existence of \textit{non-negative} $L_1$-approximating polynomials with respect to Gaussian distributions.
    This is a stronger requirement than $L_1$-approximation but weaker than sandwiching polynomials (which themselves have many applications). 
    These non-negative approximating polynomials have recently found uses in smoothed learning from positive-only examples. 

    In this short note, we prove that every class of sets with Gaussian surface area (GSA) at most $\Gamma$ under the standard Gaussian admits degree-$k$ non-negative polynomials that $\eps$-approximate its indicator functions in $L_1$-norm, for
    $k=\wt{O}\!\inparen{\sfrac{\Gamma^2}{\eps^2}}$.
    Equivalently, finite GSA implies $L_1$-approximation with the stronger pointwise guarantee that the approximating polynomial has range contained in $[0,\infty)$.
    Up to a constant-factor, this matches the degree of the best currently known Gaussian $L_1$-approximation degree bound without the non-negativity constraint. 
\end{abstract}

\section{Introduction and Main Result}
Low-degree polynomial approximation is a central tool in learning theory \cite{linial1993constant}.
It underlies the classical $L_1$-Regression algorithm for agnostic learning \cite{kalai2008agnostically,klivans2008gaussian} and, more recently, has played an important role in testable learning and related distributional learning problems \cite{rubinfeld2023testing,gollakota2023momentMatching,klivans2023testable,rubinfeld2026safely}.
A particularly strong notion that has emerged in this latter literature is that of \emph{sandwiching polynomials}, where one asks for polynomials $p_{\mathrm{down}}$ and $p_{\mathrm{up}}$ satisfying $p_{\mathrm{down}}(x)\leq \mathds{1}_H(x)\leq p_{\mathrm{up}}(x)$ for all $x$ together with a small expected gap between the two \cite{ksv2026sandwiching}.
Here, the small separation between $p_{\mathrm{down}}$ and $p_{\mathrm{up}}$ together with the ``sandwiching constraint'' implies that both $p_{\mathrm{down}}$ and $p_{\mathrm{up}}$ are $L_1$-approximating \mbox{polynomials for $\mathds{1}_H$.
We study a weaker notion:}
\vspace{-6mm}
\begin{definition}[Non-Negative $L_1$-Approximation]
    Fix a distribution $\mu$ on $\R^d$ and measurable $H \subseteq \R^d$.
    Given $\eps>0$, a polynomial $q$ is said to be \emph{non-negative $L_1$-approximating polynomial} for $H$ under $\mu$ if
    \vspace{-1mm}
    \[
        \Ex_{x\sim \mu}\insquare{\abs{q(x)-\mathds{1}_H(x)}} \leq \eps
        \qquadand
        \mathrm{Range}(q)\subseteq [0,\infty)\,. 
    \]    
    \vspace{-7mm}
\end{definition}
More generally, one could ask only that the range of $q$ be bounded below by a constant $-C$. 
We focus on the strongest requirement of $C=0$ here.
Observe that every upper sandwiching polynomial $p_{\rm up}$ is automatically a non-negative $L_1$-approximant to the indicator function, so sandwiching implies the existence of non-negative approximants.
The converse, however, need not hold:
indeed, a non-negative $L_1$-approximating polynomial may not satisfy $q(x)\geq \mathds{1}_H(x)$ pointwise.  
Hence, non-negative $L_1$ approximation sits in between ordinary $L_1$-approximation and the stronger pointwise notion of sandwiching.

This weaker notion already appears to be useful:
In particular, it is sufficient for recent constrained polynomial-regression approaches to smoothed positive-only learning, where one needs a polynomial approximation and the above one-sided range requirement, but does not require a full sandwiching pair \cite{lee2026smoothed}.
Further, similar (but different) notions have also been useful for reliable learning; see \cref{sec:discussion}.
\textit{All of this motivates isolating non-negative $L_1$-approximation as an object that may be of independent interest.}

\smallskip

\noindent
Our main observation is that, in the Gaussian setting, the existing $L_1$-approximation machinery already yields this weaker notion at essentially no loss.
More precisely, finite \textit{Gaussian Surface Area} (GSA) is sufficient to guarantee the existence of non-negative approximating polynomials of degree that matches the best known unconstrained Gaussian $L_1$-approximation bounds up to a constant factor.
We prove the following result.
\begin{theorem}[Non-negative $L_1$-approximation for GSA Classes]
\label{thm:non-negative-approx}
Let $\hyH$ be a class of Borel subsets of $\R^d$ and define
$\Gamma_{\hyH} \coloneqq \sup_{H\in \hyH} \Gamma(H),$
where $\Gamma(H)$ denotes the GSA of $H$. Suppose that $\Gamma_{\hyH} < \infty$. Then, for every $H\in \hyH$ and every $\eps \in \inparen{0,\nfrac{1}{2}}$, there is a polynomial $q_H\colon \R^d \to \R_{\geq 0}$ such that 
\[
    \Ex_{x\sim \gamma_d}\insquare{\abs{q_H(x)-\mathds{1}_H(x)}} \leq \eps
    \qquadand
    \deg\!\inparen{q_H}
    \leq     O\!\inparen{\frac{\Gamma_{\hyH}^2}{\eps^2}\,\log{\frac1\eps}}.
\]
In particular, note that each $q_H$ has range contained in $[0,\infty)$.
\end{theorem}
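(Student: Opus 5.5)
Our plan is to build $q_H$ by squaring a polynomial that approximates a square root of $\mathds{1}_H$. Since $\mathds{1}_H^2=\mathds{1}_H$, the natural candidate is
\[
  q_H \coloneqq p^2,
\]
where $p$ is a good $L_2$-approximant of $\mathds{1}_H$ under $\gamma_d$. Any such $q_H$ is automatically non-negative, and $\deg(q_H)=2\deg(p)$.

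\textbf{Reducing to an $L_2$ bound.} Using $\mathds{1}_H^2=\mathds{1}_H$ and Cauchy--Schwarz,
\[
  \Ex\insquare{\abs{p^2-\mathds{1}_H}}
  =\Ex\insquare{\abs{p-\mathds{1}_H}\cdot\abs{p+\mathds{1}_H}}
  \le \norm{p-\mathds{1}_H}_{2}\,\norm{p+\mathds{1}_H}_{2}
  \le \norm{p-\mathds{1}_H}_2\inparen{2+\norm{p-\mathds{1}_H}_2}.
\]
So it suffices to find $p$ with $\norm{p-\mathds{1}_H}_2\le \eps/3$. Then the $L_1$ error is at most $\eps$, since $\eps/3\le 1/6$.

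\textbf{Choosing $p$.} Take $p$ to be the Hermite truncation $p=\sum_{|\alpha|\le k}\widehat{\mathds{1}_H}(\alpha)H_\alpha$. Its squared $L_2$ error is exactly the Hermite tail weight $W^{>k}[\mathds{1}_H]$. The standard argument of Klivans--O'Donnell--Servedio bounds this tail via noise sensitivity:
\[
  \mathrm{NS}_\delta(H)=\Pr\insquare{\mathds{1}_H(x)\ne\mathds{1}_H(y)}\le O\!\inparen{\sqrt{\delta}\,\Gamma(H)}
\]
for $\rho=1-\delta$ correlated Gaussians (Ledoux / KOS). Also $\mathrm{NS}_\delta\ge \tfrac12(1-(1-\delta)^k)\,W^{>k}$, which gives
\[
  W^{>k}\le O\!\inparen{\sqrt{\delta}\,\Gamma}\quad\text{at } \delta=1/k, \qquad\text{i.e.}\qquad W^{>k}\le O\!\inparen{\Gamma/\sqrt{k}}.
\]

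\textbf{Main obstacle.} This is where the plan strains. To make $W^{>k}\le \eps^2/9$ we need $k=O(\Gamma^2/\eps^4)$, one factor of $\eps^{-2}$ worse than claimed. Squaring costs us because $L_1$ error $\eps$ of $p^2$ needs $L_2$ error $\eps$ of $p$, whereas plain $L_1$ approximation only needs $L_2$ error $\sqrt{\eps}$.

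My first attempt to recover the $\wt O(\Gamma^2/\eps^2)$ bound is to avoid squaring the truncation directly. Instead:
\begin{enumerate}
  \item Form the smoothed function $T_\rho\mathds{1}_H$ with $\rho=1-\delta$. It takes values in $[0,1]$ and satisfies $\norm{T_\rho\mathds{1}_H-\mathds{1}_H}_1\le \mathrm{NS}_\delta$, which is at most $\eps/2$ for $\delta\approx \eps^2/\Gamma^2$.
  \item Approximate $\sqrt{T_\rho\mathds{1}_H}$ by its Hermite truncation $r$ at degree $k$. Smoothing makes the Hermite tail of $T_\rho\mathds{1}_H$ decay like $(1-\delta)^{2k}$.
  \item Output $q_H=r^2$.
\end{enumerate}
The error then splits as $\norm{T_\rho\mathds{1}_H-\mathds{1}_H}_1+\norm{r^2-T_\rho\mathds{1}_H}_1$. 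The second term is bounded by $\norm{r-\sqrt{T_\rho\mathds{1}_H}}_2\cdot O(1)$ via the same Cauchy--Schwarz identity.

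The difficulty is controlling the Hermite tail of $\sqrt{T_\rho\mathds{1}_H}$, since the square root may destroy smoothness near $0$. To handle it, I would replace the square root by $\sqrt{T_\rho\mathds{1}_H+\eta}$ with $\eta\approx\eps$. This costs only $\eta$ in $L_1$. The function is then Lipschitz in $T_\rho\mathds{1}_H$ with constant $\eta^{-1/2}$, so its Gaussian Dirichlet energy is at most $\eta^{-1}\,\Ex\norm{\nabla T_\rho\mathds{1}_H}^2\lesssim \eta^{-1}\Gamma/\sqrt{\delta}$. By Poincaré this gives tail weight at most $\eta^{-1}\Gamma/(\sqrt{\delta}\,k)$. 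Taking $k=\wt O(\Gamma^2/\eps^2)$, with the $\log(1/\eps)$ absorbing constant slack, should make this term $O(\eps^2)$.

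The step I expect to fail or require the most care is this gradient/energy bound for $T_\rho\mathds{1}_H$ in terms of GSA. I would verify it using the identity $\Ex\norm{\nabla T_\rho f}^2\approx \mathrm{NS}$-type quantities divided by $\delta$. I would also check that the final exponents actually balance to $\Gamma^2/\eps^2$.
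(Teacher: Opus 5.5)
\textbf{There is a genuine gap.} Your proposal does not reach the claimed degree, and it breaks at the step you defer (``check that the exponents balance''). Your first route gives $k=O(\Gamma^2/\eps^4)$, as you note.

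In your second route, take $\eta\approx\eps$ and $\delta\approx\eps^2/\Gamma^2$. Your Poincaré bound on the Hermite tail of $\sqrt{T_\rho\mathds{1}_H+\eta}$ is then $\eta^{-1}\Gamma/(\sqrt{\delta}\,k)\approx \Gamma^2/(\eps^2 k)$. You need this to be $O(\eps^2)$, i.e.\ an $L_2$ error of $O(\eps)$ before squaring. That forces $k\gtrsim \Gamma^2/\eps^4$, the same loss as before, and a $\log(1/\eps)$ factor cannot absorb $\eps^{-2}$.

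The gradient bound $\Ex\norm{\nabla T_\rho \mathds{1}_H}^2\lesssim \Gamma/\sqrt{\delta}$ that you worried about is fine up to constants. What fails is Poincaré: it only gives $1/k$ tail decay, which throws away the exponential decay $\rho^{2k}$ that smoothing provides. The $\eta^{-1}$ Lipschitz loss from the square root adds to this.

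Your diagnosis that plain $L_1$ approximation ``only needs $L_2$ error $\sqrt{\eps}$'' is also incorrect. The naive $L_2$ route already gives $\Gamma^2/\eps^4$ without any squaring. The unconstrained $\wt O(\Gamma^2/\eps^2)$ bound comes from paying for the smoothing bias in $L_1$, and only for truncation of the smoothed function in $L_2$.

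The missing observation is that no square root is needed. Since $h\coloneqq \mathds{1}_H$ satisfies $h=h^2$ and $g_0\coloneqq T_\rho h$ takes values in $[0,1]$, it suffices to have $r^2\approx g_0^2$ rather than $r^2\approx g_0$. So take $r\coloneqq \Pi_{\le k}T_\rho h$ and $q_H\coloneqq r^2$. Then
\[
  r^2-h=(r-g_0)(r+g_0)+(g_0-h)(g_0+h).
\]
Use $\norm{r}_{L_2(\gamma_d)}\le\norm{g_0}_{L_2(\gamma_d)}\le 1$, $\norm{r-g_0}_{L_2(\gamma_d)}\le\rho^{k+1}$, and $0\le g_0,h\le 1$. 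Cauchy--Schwarz on the first product and the pointwise bound $|g_0+h|\le 2$ on the second give
\[
  \norm{r^2-h}_{L_1(\gamma_d)}\le 2\rho^{k+1}+2\norm{g_0-h}_{L_1(\gamma_d)}\le 2\rho^{k+1}+O\!\inparen{\Gamma(H)\sqrt{\delta}}.
\]
Choosing $\delta\approx \eps^2/\Gamma^2$ and $k\approx \delta^{-1}\log(1/\eps)$ gives the theorem with $\deg(q_H)=2k$.

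This is exactly the paper's construction. With $p=\Pi_{\le t}T_\rho(2h-1)$ one has $r=\frac{1+p}{2}$, so $r^2=\frac14(1+p)^2$. The paper's split into $(p-g)(p+g+2)$ and $g^2-1=g^2-f^2$ is the same decomposition, regrouped and written in $\pm1$ notation.
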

Thus, any Gaussian concept class with uniformly bounded GSA automatically admits non-negative $L_1$ approximating polynomials.
Moreover, \cref{thm:non-negative-approx} directly upgrades $L_1$ approximating polynomials constructed for GSA classes by \cite{pesenti2026agnostic,klivans2008gaussian} to \emph{non-negative} Gaussian $L_1$-approximation in the following sense:
    if $p_H$ is the low-degree polynomial produced by the truncated Hermite-expansion of the Gaussian smoothing of $2\, \mathds{1}_H-1$, then $q_H=(\nfrac{1}{4})(1+p_H(x))^2$ is a non-negative $L_1$-approximating polynomial for $\mathds{1}_H$.
Hence, \cref{thm:non-negative-approx} shows that to ensure non-negativity we only require an additional factor of $2$ over the degree bounds obtained by \cite{pesenti2026agnostic,klivans2008gaussian}.

Specializing this result with the classical bounds on the GSA of $O\!\inparen{\sqrt{\log m}}$ for intersections of $m$ halfspaces \cite{klivans2008gaussian} and $O\!\inparen{d^{1/4}}$ for convex sets \cite{klivans2008gaussian,ball1993gsaConvexSets} yields degree bounds of $O\!\inparen{
        \frac{\log m}{\eps^2}
         \, \log{\frac1\eps}
     }$ and $O\!\inparen{
        \frac{\sqrt{d}}{\eps^2}\, \log{\frac1\eps}
    }$ respectively.

\begin{proof}[Proof of \cref{thm:non-negative-approx}]
Fix $H\in \hyH$ and write
\[
h \coloneqq \mathds{1}_H
\qquadand
f \coloneqq 2h-1\,.
\]
As is standard, we construct a polynomial approximation to $h$ by first smoothing $f$ and taking its truncated Hermite expansion.
To obtain a non-negative polynomial approximation we do an additional post-processing step.
For $\rho \in \insquare{0,1}$, let $T_\rho$ denote the Ornstein--Uhlenbeck operator,
\[
T_\rho f(x) \coloneqq \Ex_{Z\sim \gamma_d}\insquare{f\!\inparen{\rho x + \sqrt{1-\rho^2}\,Z}}\,,
\]
and, for $t\in \Z_{\geq 0}$, let $\Pi_{\leq t}$ denote truncation of the Hermite expansion to degree at most $t$.
Define 
\[
    g\coloneqq T_\rho f\,,\qquad 
    p\coloneqq \Pi_{\leq t} g\,,\qquadand
    q = \frac{1}{4}(1+p)^2\,.
\]
Observe that $q(x)\geq 0$ for every $x$, and $\deg\!\inparen{q}\leq 2t$. 
Next, we record some standard facts.

\begin{lemma}[\citep{pesenti2026agnostic,klivans2008gaussian}]
\label{lem:gsa-smoothing-input}
Let $\gamma_d$ be the standard Gaussian measure on $\R^d$. 
For $\rho\in\insquare{0,1}$, 
\[
    \|f-g\|_{L_1(\gamma_d)} \leq 2\sqrt{\pi}\,\Gamma(H)\sqrt{1-\rho}
    \qquadand
    \|g-p\|_{L_2(\gamma_d)} \leq \rho^{t+1}\,.
\]

\end{lemma}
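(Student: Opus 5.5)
We prove the two bounds separately. The first is a noise-sensitivity estimate for sets of bounded Gaussian surface area; the second is a direct Hermite-spectrum computation.

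\emph{The $L_2$ truncation bound.}
Expand $f=\sum_{\alpha}\hat f(\alpha)H_\alpha$ in the orthonormal Hermite basis. Since $T_\rho H_\alpha=\rho^{|\alpha|}H_\alpha$, we have $g=\sum_\alpha \rho^{|\alpha|}\hat f(\alpha)H_\alpha$. Hence
\[
\|g-p\|_{L_2(\gamma_d)}^2=\sum_{|\alpha|>t}\rho^{2|\alpha|}\hat f(\alpha)^2\le \rho^{2(t+1)}\sum_\alpha \hat f(\alpha)^2=\rho^{2(t+1)}\|f\|_{L_2}^2 .
\]
Because $|f|=1$ pointwise, $\|f\|_{L_2}=1$, and taking square roots gives $\rho^{t+1}$.

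\emph{The $L_1$ smoothing bound.}
Let $\theta\in[0,\pi/2]$ satisfy $\cos\theta=\rho$, and let $x,Z$ be independent standard Gaussians. Then
\[
|f(x)-g(x)|=|\E_Z[f(x)-f(\rho x+\sqrt{1-\rho^2}Z)]|\le \E_Z|f(x)-f(y)|, \qquad y=\rho x+\sin\theta\,Z .
\]
Since $|f(x)-f(y)|=2\cdot\mathds{1}[\text{exactly one of }x,y\in H]$, we get $\|f-g\|_{L_1}\le 2\Pr[\mathds{1}_H(x)\neq\mathds{1}_H(y)]$. So the task reduces to bounding the Gaussian noise sensitivity by $\sqrt{\pi}\,\Gamma(H)\sqrt{1-\rho}$. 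This is the step I expect to be the main obstacle.

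The plan is to use the rotation trick of Klivans--O'Donnell--Servedio, via Ledoux's argument.
\begin{itemize}
\item Write $y=\cos\theta\, x+\sin\theta\, Z$. Interpolate along the arc $x_s=\cos s\, x+\sin s\, Z$ for $s\in[0,\theta]$, subdividing it into $n$ steps of angle $\theta/n$.
\item Each consecutive pair $(x_{s},x_{s+\theta/n})$ is a $\cos(\theta/n)$-correlated Gaussian pair.
\item By a union bound, $\Pr[\mathds{1}_H(x)\neq \mathds{1}_H(y)]\le n\,\mathrm{NS}_{\theta/n}(H)$.
\item For small angle $\delta$, the pair is $(x,x+\delta Z+O(\delta^2))$. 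The probability of crossing $\partial H$ is then $\delta\,\E|\langle Z,\nu\rangle|\cdot\Gamma(H)+o(\delta)$, where $\nu$ is the unit normal. This uses the surface-area definition $\Gamma(H)=\liminf_{\delta\to0}\gamma_d(H^\delta\setminus H)/\delta$, together with the fact that a one-dimensional Gaussian projection has $\E|\langle Z,\nu\rangle|=\sqrt{2/\pi}$. A cleaner route is to apply the Gaussian isoperimetric/coarea statement directly.
\item Letting $n\to\infty$ gives $\Pr[\cdot]\le \theta\sqrt{2/\pi}\,\Gamma(H)\cdot(\text{const})$.
\item Finally, $\theta=\arccos\rho\le \frac{\pi}{\sqrt2}\sqrt{1-\rho}$ converts this into the $\sqrt{1-\rho}$ form.
\end{itemize}

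The constant bookkeeping must land at $2\sqrt\pi$. I would first try to match exactly the computation in \citep{klivans2008gaussian}, Lemma on noise sensitivity versus GSA. If the constants come out slightly differently, they only affect the $O(\cdot)$ in \cref{thm:non-negative-approx}.

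The delicate point is justifying the first-order expansion of the crossing probability with only the liminf definition of GSA. For this I would rely on the cited results \citep{pesenti2026agnostic,klivans2008gaussian} rather than rederive it.
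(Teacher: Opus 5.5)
Your proposal is correct and matches the paper, which gives no argument of its own and simply imports both bounds from \cite{pesenti2026agnostic,klivans2008gaussian}. The $L_2$ bound is exactly your Hermite computation. Your reduction of the $L_1$ bound is in fact an equality: $\|f-g\|_{L_1(\gamma_d)}=1-\E[f(x)f(y)]=2\Pr[\mathds{1}_H(x)\neq\mathds{1}_H(y)]$ because $|f|=1\geq|g|$. After that, the step you defer is precisely the noise-sensitivity bound of \cite{klivans2008gaussian}.

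Your unspecified constant is exactly $1$: in full, $\Pr[\mathds{1}_H(x)\neq\mathds{1}_H(y)]\leq\sqrt{2/\pi}\,\arccos(\rho)\,\Gamma(H)$. Combined with your estimate $\arccos\rho\leq\frac{\pi}{\sqrt2}\sqrt{1-\rho}$, this lands on $2\sqrt{\pi}$ with no slack. The small-angle expansion you flag can fail as an equality for general Borel sets; only the upper bound holds, and that is all you need. Ledoux's continuous version of your rotation argument avoids it entirely: differentiate $s\mapsto\phi_r(x_s)$ for $\phi_r=(1-\mathrm{dist}(\cdot,H)/r)_+$, use that $\frac{d}{ds}x_s=-\sin s\,x+\cos s\,Z$ is a standard Gaussian independent of $x_s$, and let $r\to0$ along a sequence realizing the $\liminf$ in $\Gamma(H)$.
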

Our goal is to bound $\norm{q-h}_{L_1(\gamma_d)}$.
Toward this, observe that 
\[
    \norm{q-h}_{L_1(\gamma_d)}
    = 
    \norm{q-\frac{f+1}{2}}_{L_1(\gamma_d)}
    \leq 
    \norm{q-\frac{g+1}{2}}_{L_1(\gamma_d)}
    + \frac{1}{2} \norm{f-g}_{L_1(\gamma_d)}\,.
    \yesnum\label{eq:target}
\]
Substituting the value of $q$ in $q-\frac{g+1}{2}$, we obtain that 
\[
    q-\frac{g+1}{2}
    = \frac{p^2+2p+1-2g-2}{4}
    =
    \frac{(p-g)(p+g+2) + (g^2-1)}{4}\,.
\]
Combining this with the triangle inequality implies that
\[
    \norm{q-\frac{g+1}{2}}_{L_1(\gamma_d)}
    \leq 
    \frac{1}{4}\norm{(p-g)(p+g+2)}_{L_1(\gamma_d)}
    + 
    \frac{1}{4}\norm{g^2-1}_{L_1(\gamma_d)}\,.
    \yesnum\label{eq:decomposition}
\]
Since $g$ is an average of $\pm 1$-random variables, it satisfies $g(\cdot)\in\insquare{-1,1}$. 
Also, since $\Pi_{\leq t}$ is an orthogonal projection in $L_2(\gamma_d)$ and $T_\rho$ is an $L_2(\gamma_d)$-contraction (\eg{}, \cite[Lemma A.2]{pesenti2026agnostic}),  
\[
\norm{p}_{L_2(\gamma_d)} \leq \norm{g}_{L_2(\gamma_d)} \leq \norm{f}_{L_2(\gamma_d)} = 1\,.
\]
Together these imply that
\[
    \norm{p+g+2}_{L_2(\gamma_d)}
    \leq \norm{p}_{L_2(\gamma_d)} + \norm{g}_{L_2(\gamma_d)} + 2
    \leq 4\,.
    \yesnum\label{eq:term1}
\]
Further, since $f(x)\in \inbrace{\pm 1}$, $f(x)^2=1$ for any $x$ and, hence,
$\norm{g^2-1}_{L_1(\gamma_d)} 
    = \norm{g^2-f^2}_{L_1(\gamma_d)} 
    = \norm{(f-g)(f+g)}_{L_1(\gamma_d)}.$
Now, as observed before $\abs{f(x)}\leq 1$ and $\abs{g(x)}\leq 1$ for each $x$, implying: 
\vspace{-1mm}
\[
    \norm{g^2-1}_{L_1(\gamma_d)} 
    \leq 2\norm{f-g}_{L_1(\gamma_d)}\,.
    \yesnum\label{eq:term2}
\]
\vspace{-1mm}
Substituting \cref{eq:term1,eq:term2} into \cref{eq:decomposition} implies 
\vspace{-1mm}
\begin{align*}
    \norm{q-\frac{g+1}{2}}_{L_1(\gamma_d)}
    ~~
    &\leq~~ 
    \frac{1}{4}\norm{p-g}_{L_2(\gamma_d)}\cdot \norm{p+g+2}_{L_2(\gamma_d)}
    + 
    \frac{1}{2}\norm{f-g}_{L_1(\gamma_d)}\\
    &\leq~~ \norm{p-g}_{L_2(\gamma_d)}
    + 
    \frac{1}{2}\norm{f-g}_{L_1(\gamma_d)}\,.
\end{align*}
\vspace{-1mm}
Combining this with \cref{eq:target}, yields
\[
    \norm{q-h}_{L_1(\gamma_d)}
    \leq 
    \norm{p-g}_{L_2(\gamma_d)}
    + 
    \norm{f-g}_{L_1(\gamma_d)}\,.
\]
Applying \cref{lem:gsa-smoothing-input}, we obtain
\[
\norm{q-h}_{L_1(\gamma_d)}
\leq \rho^{t+1} + 2\sqrt{\pi}\,\Gamma(H)\sqrt{1-\rho}\,.
\]
It remains to choose $\rho$ and $t$. 
If $\Gamma(H)=0$, then taking $\rho=0$ and $t=0$ gives a constant polynomial $q$ with zero error. 
Hence, it remains to consider the case $\Gamma(H)>0$.
Set
\[
    \rho \coloneqq 1-\min\!\inbrace{1,~\frac{\eps^2}{16\pi\,\Gamma(H)^2}}\,.
\]
So that $2\sqrt{\pi}\,\Gamma(H)\sqrt{1-\rho} \leq \nfrac{\eps}{2}.$
If $\rho=0$, take $t=0$, and the conclusion follows. Otherwise, choose the smallest $t$ so that
$\rho^{t+1} \leq \nfrac{\eps}{2}.$
Since $\log{\nfrac1\rho} \geq 1-\rho$ for $\rho\in\inparen{0,1}$, it is enough to require
$t+1 \geq \frac{1}{1-\rho}\cdot \log{\nfrac2\eps}.$
For this choice,
\[
\norm{q-h}_{L_1(\gamma_d)} \leq \eps\,.
\]
Moreover,
$t
=
O\!\inparen{\frac{\Gamma(H)^2}{\eps^2}\,\log{\frac1\eps}},$
and, therefore, as desired, $\deg\!\inparen{q}
\leq 2t
=
O\!\inparen{\frac{\Gamma(H)^2}{\eps^2}\,\log{\frac1\eps}}.$
\end{proof}

\section{Discussion and Related Work}
\label{sec:discussion}
\cref{thm:non-negative-approx} shows that, in the Gaussian setting, lower-bounded $L_1$-approximation is essentially no harder than ordinary $L_1$-approximation:
finite GSA already implies the existence of non-negative $L_1$-approximating polynomials of essentially the same degree.
In contrast, a similarly general implication is not known for sandwiching polynomials over arbitrary finite-GSA classes.
That said, recent work has obtained strong sandwiching bounds for several important families with low intrinsic dimension, which include some, but \mbox{not all, finite-GSA classes (at least provably) \cite{ksv2026sandwiching}.}

Another related notion appears in reliable learning.
\citet{kkm12reliable} introduced reliable agnostic learning, and \citet{kt14reliable} connected it to one-sided approximating polynomials.
For a Boolean function $f\colon \{-1,1\}^n \to \{-1,1\}$, a positive one-sided $\eps$-approximating polynomial $p$ is required to satisfy
\[
p(x) \in [1-\eps,\infty)
\quad \text{when } f(x)=1\,,
\qquadand
p(x) \in [-1-\eps,-1+\eps]
\quad \text{when } f(x)=-1\,,
\]
with the negative one-sided notion defined symmetrically.
After the affine rescaling $q = (p+1)/2$, such a polynomial is bounded below by $-\nfrac{\eps}{2}$ on the Boolean cube.
Thus, one-sided approximation is related to the lower-bounded $L_1$ notion studied here, but the two notions are incomparable:
one-sided approximation imposes asymmetric pointwise control, whereas our notion is distributional and average-case.
To the best of our knowledge, no analogue of \cref{thm:non-negative-approx} is known for one-sided approximating polynomials.
The reliable-learning literature does, however, provide several concrete existence results; see, for instance, the works of \cite{kt14reliable,kkm12reliable}.

More broadly, it would be interesting to understand whether similarly generic reductions exist beyond the Gaussian setting.
In particular, it would be interesting to understand when and which $L_1$-approximation can be upgraded to lower-bounded $L_1$-approximation with little or no loss in degree, and how far this weaker notion remains from full sandwiching.
We view clarifying the gap between (ordinary) $L_1$-approximation, lower-bounded $L_1$-approximation, and sandwiching $L_1$-approximation as an interesting direction for future work.

\medskip
\noindent\textbf{Acknowledgments.}
    We thank Kostas Stavropoulos for helpful discussions on the literature and feedback on a draft of this note.
    
\printbibliography

\end{document}